\documentclass{llncs}

\usepackage{graphicx}
\usepackage{amsmath}
\usepackage{amssymb}
\usepackage{makecell}
\usepackage{color}
\usepackage{algorithm}
\usepackage{algorithmic}

\newcommand{\E}{\mathbb{E}}
\newcommand{\EE}[1]{\mathbb{E} \left[ {#1} \right]}
\newcommand{\X}{(X)}

\newcommand{\bx}{(\boldsymbol{x})}
\newcommand{\bK}{\boldsymbol{K}}
\newcommand{\prevF}[1]{f_{i-1} #1 , \cdots , f_1 #1}
\newcommand{\f}[1]{\boldsymbol{f}_{#1}}

\begin{document}

\title{Non-redundant Spectral Dimensionality Reduction}

\author{Yochai Blau and Tomer Michaeli}
\institute{Technion--Israel Institute of Technology, Haifa, Israel
\email{\{yochai@campus,tomer.m@ee\}.technion.ac.il}}

\maketitle

\begin{abstract}
Spectral dimensionality reduction algorithms are widely used in numerous domains, including for recognition, segmentation, tracking and visualization. However, despite their popularity, these algorithms suffer from a major limitation known as the ``repeated Eigen-directions'' phenomenon. That is, many of the embedding coordinates they produce typically capture the same direction along the data manifold. This leads to redundant and inefficient representations that do not reveal the true intrinsic dimensionality of the data. In this paper, we propose a general method for avoiding redundancy in spectral algorithms. Our approach relies on replacing the orthogonality constraints underlying those methods by unpredictability constraints. Specifically, we require that each embedding coordinate be unpredictable (in the statistical sense) from all previous ones. We prove that these constraints necessarily prevent redundancy, and provide a simple technique to incorporate them into existing methods. As we illustrate on challenging high-dimensional scenarios, our approach produces significantly more informative and compact representations, which improve visualization and classification tasks.
\end{abstract}

\section{Introduction}
The goal in nonlinear dimensionality reduction is to construct compact representations of high dimensional data, which preserve as much of the variability in the data as possible. Such techniques play a key role in diverse applications, including recognition and classification \cite{he2005face,belkin2004semi,geng2005supervised,pang2005face}, tracking \cite{lim2006dynamic,wang2003learning,lee2007modeling}, image and video segmentation \cite{zhang2006manifold,pless2003image,isola2014crisp}, pose estimation \cite{elgammal2004inferring,raytchev2004head,souvenir2008learning}, age estimation \cite{guo2008image},  spatial and temporal super-resolution \cite{chang2004super,pless2003image,chakrabarti2007super}, medical image and video analysis \cite{georg2008simultaneous,souvenir2006image,brun2003coloring} and data visualization \cite{vlachos2002non,lim2003planar,zhang2013trace,gisbrecht2015data}.

Many of the dimensionality reduction methods developed in the last two decades are based on spectral decomposition of some data-dependent (kernel) matrix. These include, e.g.\@, Locally Linear Embedding (LLE) \cite{roweis2000nonlinear}, Laplacian Eigenmaps (LEM) \cite{belkin2003laplacian}, Isomap \cite{tenenbaum2000global}, Hessian Eigenmaps (HLLE) \cite{donoho2003hessian}, Local Tangent Space Alignment (LTSA) \cite{zhang2004principal}, Diffusion Maps (DFM) \cite{coifman2006diffusion},  and Kernel Principal Component Analysis (KPCA) \cite{scholkopf1997kernel}. Methods in this family differ in how they construct the kernel matrix, but in all of them the eigenvectors of the kernel serve as the low-dimensional embedding of the data points \cite{ham2004kernel,bengio2004out,van2009dimensionality}.

A significant shortcoming of spectral dimensionality reduction algorithms is the ``repeated eigen-directions" phenomenon \cite{gerber2007robust,goldberg2008manifold,dsilva2015parsimonious}. That is, successive eigenvectors tend to represent directions along the data manifold which were already captured by previous ones. This leads to redundant representations that are unnecessarily larger than the intrinsic dimensionality of the data. To illustrate this effect, Fig.~\ref{fig:swissRoll} visualizes the two dimensional embeddings of a Swiss roll, as obtained by several popular spectral dimensionality reduction algorithms. It can be seen that in all the examined methods, the second dimension of the embedding carries no additional information with respect to the first. Specifically, although the first dimension already completely characterizes the position along the long axis (angular direction) of the manifold, the second dimension is also a function of this axis. Progression along the short axis (vertical direction) is captured only by the third eigenvector in this case (not shown). Therefore, the representation we obtain is 50\% redundant: Its second feature is a deterministic function of the first and thus superfluous.

\begin{figure}[!t]
	\begin{center}
		\includegraphics[width=\linewidth]{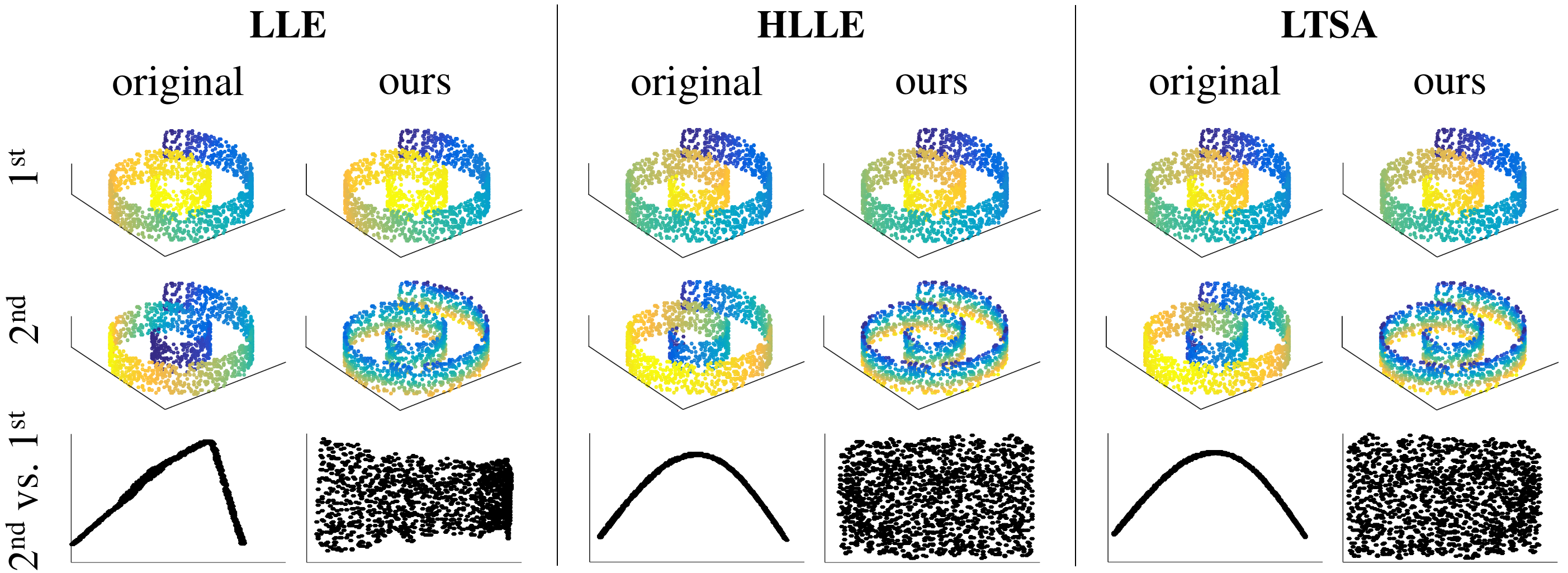}
	\end{center}
	\caption{The first two projections of data points lying on a Swiss roll manifold, as obtained with the original LLE, HLLE and LTSA algorithms and with our non-redundant versions of those algorithms. \emph{Top row:} The points colored by the first projection. \emph{Middle row:} The points colored by the second projection. As can be seen, the original algorithms redundantly capture progression along the angular direction twice. In contrast, in our versions of those algorithms, the second projection captures the vertical direction. \emph{Bottom row:} Scatter plot of the $2$nd projection vs.\@ the $1$st. In the original algorithms, the $2$nd projection is a function of the $1$st, while in our algorithms it is not.}
	\label{fig:swissRoll}
\end{figure}

In fact, the redundancy of spectral methods can be arbitrarily high. To see this, consider for example the embedding obtained by the LEM method, whose kernel approximates the Laplace-Beltrami operator on the manifold. The Swiss-roll corresponds to a two dimensional strip with edge lengths $L_1$ and $L_2$. Thus, the eigenfunctions and eigenvalues (with Neumann boundary conditions) are given in this case by
\begin{align}\label{eq:LB-eigenfunctions}
\phi_{k_1k_2}\left(x_1,x_2\right)&=\cos \left( \frac{k_1\pi x_1}{L_1} \right) \cos \left( \frac{k_2\pi x_2}{L_2} \right) \enspace ,\\
\lambda_{k_1k_2}&=\left( \frac{k_1\pi}{L_1} \right)^2 + \left( \frac{k_2\pi}{L_2} \right)^2 \enspace ,
\end{align}
for $k_1,k_2 = 0,1,2,\ldots$,  where $x_1$ and $x_2$ are the coordinates along the strip. Ignoring the trivial function ${\phi_{0,0}(x_1,x_2)=1}$, it can be seen that the first $\lfloor L_1/L_2 \rfloor$ eigenfunctions (corresponding to the smallest eigenvalues) are functions of only $x_1$ and not $x_2$ (see Fig.~\ref{fig:strips}). Thus, at least $\lfloor L_1/L_2 \rfloor + 1$ projections are required to capture the two dimensions of the manifold, which leads to a very inefficient representation when $L_1$ is much larger than $L_2$. In fact, projections $2,\ldots,\lfloor L_1/L_2 \rfloor$ are all functions of projection 1, and are thus redundant. For example, when $L_1 > 2 L_2$, the first two eigenfunctions are $\phi_{1,0}(x_1,x_2) = \cos ( \pi x_1 / L_1 )$ and $\phi_{2,0}(x_1,x_2) = \cos ( 2 \pi x_1 / L_1 )$, which clearly satisfy $\phi_{2,0}(x_1,x_2) = 2 \phi_{1,0}^2(x_1,x_2) - 1$. Notice that this redundancy appears despite the fact that the functions $\{\phi_{k_1k_2}\}$ are orthogonal (being eigenfunctions of a self-adjoint positive definite operator). This highlights the fact that \emph{orthogonality does not imply non-redundancy}.

\begin{figure}[!t]
	\begin{center}
		\includegraphics[width=0.7\linewidth]{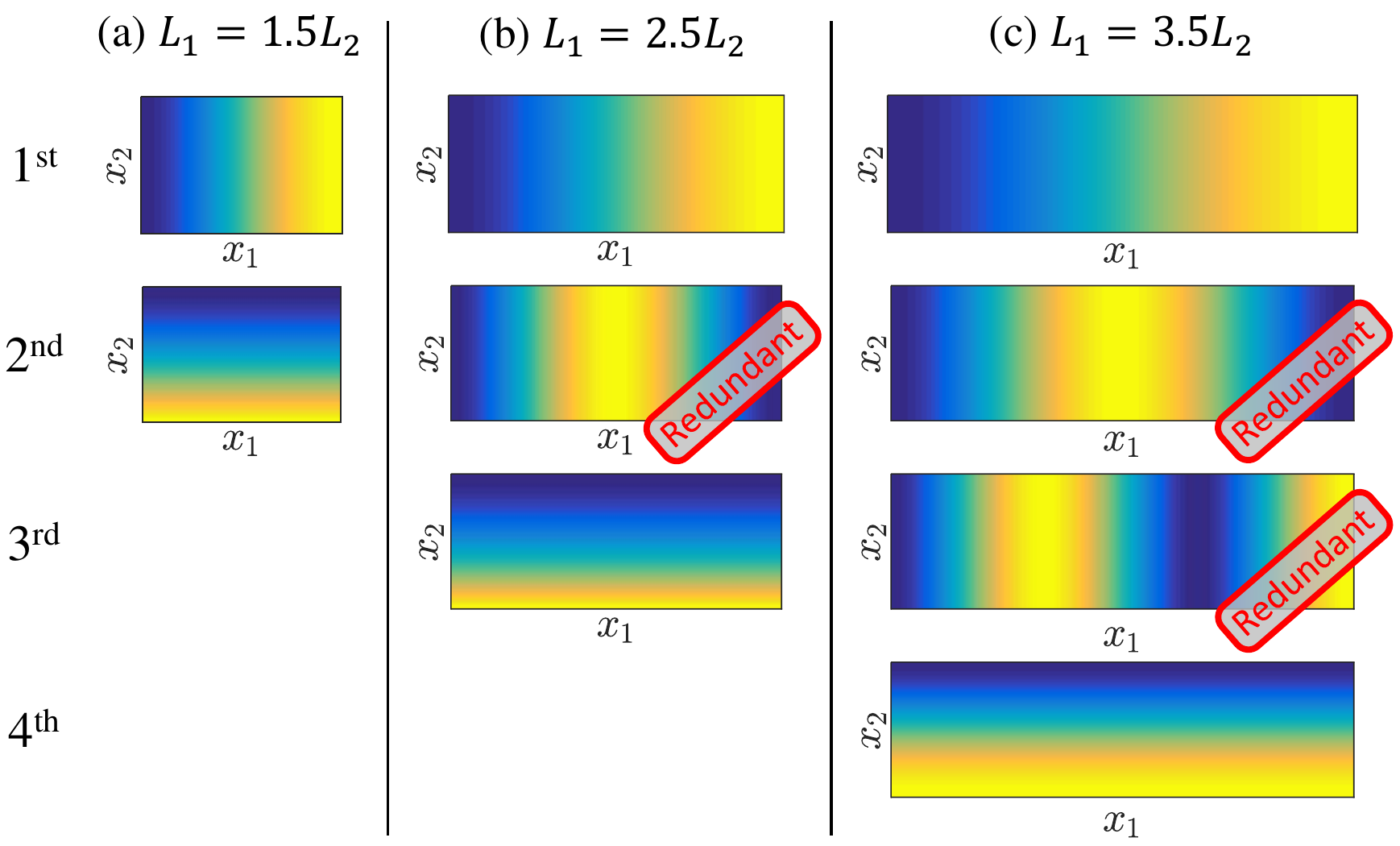}
	\end{center}
	\caption{A 2D strip with edge lengths (a)~$L_1=1.5L_2$, (b)~$L_1=2.5L_2$ and (c)~$L_1=3.5L_2$, colored according to the first few coordinates of the Laplacian Eigenmaps embedding (the eigenfunctions of the Laplace-Beltrami operator). Coordinates $2,\ldots,\lfloor L_1/L_2 \rfloor$ are redundant as they are all functions of only~$x_1$, which is already fully represented by the first coordinate.}
	\label{fig:strips}
\end{figure}

The above analysis is not unique to the LEM method. Indeed, as shown in~\cite{goldberg2008manifold}, spectral methods produce redundant representations whenever the variances of the data points along different manifold directions vary significantly. This observation, however, cannot serve to solve the problem as in most cases the underlying manifold is not known a-priori.

In this paper, we propose a general framework for eliminating the redundancy caused by repeated eigen-directions. Our approach applies to all spectral dimensionality reduction algorithms, and is based on replacing the orthogonality constraints underlying those methods, by unpredictability ones. Namely, we restrict subsequent projections to be unpredictable (in the statistical sense) from all previous ones. As we show, these constraints guarantee that the projections be non-redundant. Therefore, once a manifold dimension is fully represented by a set of projections in our method, the following projections must capture a new direction along the manifold. As we demonstrate on several high-dimensional data-sets, the embeddings produced by our algorithm are significantly more informative than those learned by conventional spectral methods.

\section{Related Work} \label{sec:RelatedWork}

\begin{figure}[!t]
	\begin{center}
		\includegraphics[width=\linewidth]{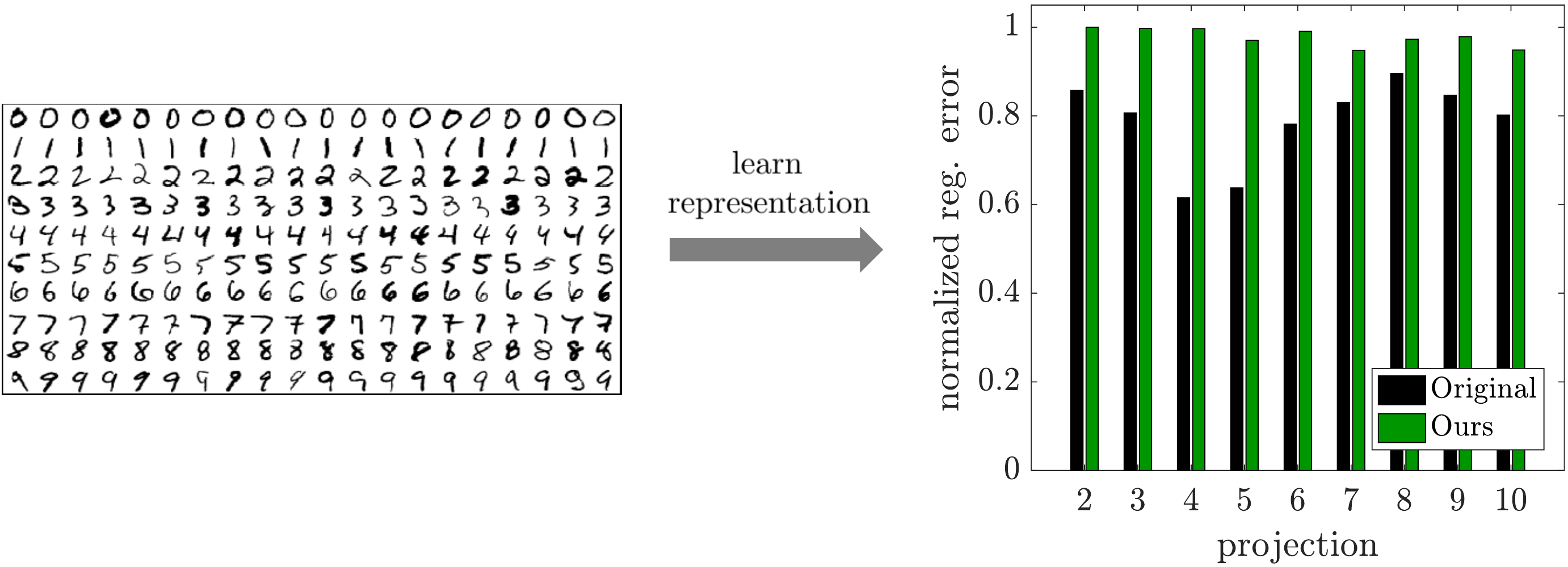}
	\end{center}
	\caption{A 10-dimensional representation of 15K MNIST handwritten digits \cite{MNIST} was learned with LEM and our non-redundant LEM. The normalized error attained by regressing each projection against all previous ones indicates to what extent the projection is redundant (higher is less redundant)  \cite{dsilva2015parsimonious}.}
	\label{fig:projRegression}
\end{figure}

Very few works suggested ways to battle the repeated eigen-directions phenomenon. Perhaps the simplest approach is to identify the redundant projections in a post-processing manner \cite{dsilva2015parsimonious}. In this method, one begins by computing a large set of projections. Each projection is then regressed against all previous ones (using some nonparametric regression method). Projections with low regression errors (i.e.\ which can be accurately predicted from the preceding ones) are discarded. This approach is quite efficient but usually works well only in simple situations. Its key limitation is that it is restricted to choose the projections from a given finite set of functions, which may not necessarily contain a ``good'' subset. Indeed, as we demonstrate in Fig.~\ref{fig:projRegression}, in real-world high-dimensional settings all the projections tend to be partially predictable from previous ones. Yet, there usually does not exist any single projection which can be considered fully redundant. Therefore, despite the obvious dependencies, almost no projection is practically discarded in this approach. In contrast, our algorithm produces projections which cannot be predicted from the previous ones (with normalized regression errors $\sim$100\%). Therefore, we are able to preserve more information about the data.

Another simple approach is to compute the projections sequentially, by eliminating the variations in the data which can be attributed to the projections that have already been computed. A naive way of doing so, would be to subtract from the data points their predictions based on all the previous projections. However, perhaps counter-intuitively, this \emph{sequential regression} process does not necessarily prevent redundancy. This is because the data points may fall off the manifold during the iterations, as demonstrated in Fig.~\ref{fig:ringExample}(b).

\begin{figure}[!t]
	\begin{center}
		\includegraphics[width=\linewidth]{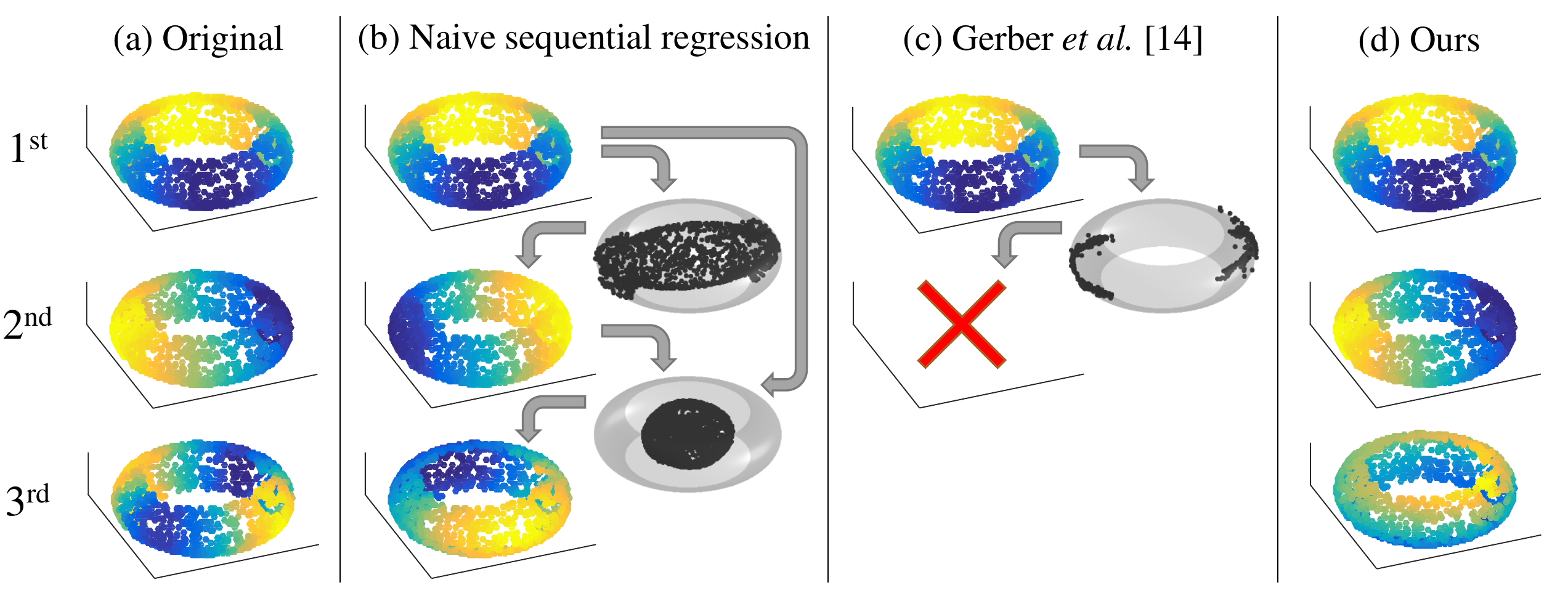}
	\end{center}
	\caption{(a)~The first three projections of points lying on a ring manifold, obtained with the original LEM algorithm. The projections correspond to $\cos(\theta)$, $\sin(\theta)$ and $\sin(2\theta+c)$, where $\theta$ is the outer angle of the ring. In this case, Projection~$2$ is not a function of Projection~$1$ and is thus non-redundant. But Projection~$3$ is a function of Projections $1$ and $2$, and is thus redundant. (b)~The projections obtained with the naive sequential regression approach (Sect.~\ref{sec:RelatedWork}). Here, Projection~$3$ is still redundant. The right column shows the points after subtracting their prediction from previous projections, which causes them to fall off the manifold. (c)~The projections obtained with the algorithm of~\cite{gerber2007robust}. Here, the algorithm halts after one projection. The right column shows the points after the advection process along the manifold, which results in two clusters forming an unconnected graph. (d)~The projections obtained with our non-redundant version of LEM. Our algorithm extracts a non-redundant third projection, which captures progression along the inner angle of the ring.}
	\label{fig:ringExample}
\end{figure}

A more sophisticated approach, suggested by Gerber et. al.\ \cite{gerber2007robust}, is to collapse the data points \emph{along the manifold} in the direction of the gradient of the previous projection. In this approach, the points always remain on the manifold. However, this method fails whenever a projection is a non-monotonic function of some coordinate along the manifold. This happens, for example, in the ring manifold of Fig.~\ref{fig:ringExample}. In this case, the first projection extracted by LEM corresponds to $\cos(\theta)$, where $\theta$ is the outer angle of the ring. Therefore, before computing the second projection, the advection process moves the points along the $\theta$ coordinate towards the locations at which $\cos(\theta)$ attains its mean value, which is~$0$. This causes the points with $\theta\in(0,\pi)$ to collapse to $\theta=\pi/2$, and the points with $\theta\in(\pi,2\pi)$ to collapse to $\theta=3\pi/2$. The two resulting clusters form an unconnected graph, so that LEM cannot be applied once more. An additional drawback of this method is that it requires a-priori knowledge of the manifold dimension. Furthermore, it is very computationally intensive and thus impractical for high-dimensional big data applications.

In this paper, we propose a different approach. Similarly to the methods described above, our algorithm is sequential. However, rather than heuristically modifying the data points in each stage, we propose to directly incorporate constraints which guarantee that the projections are not redundant.

\section{Eliminating Redundancy} \label{sec:Redundancy}

Nonlinear dimensionality reduction algorithms seek a set of \emph{non-linear} projections $f_i:\mathbb{R}^{D}\rightarrow\mathbb{R},\,i=1,\cdots,d$ which map $D$-dimensional data points $\boldsymbol{x}_n\in \mathbb{R}^D$ into a $d$-dimensional feature space ($d<D$).

\begin{definition}\label{def:redundancy}
	We call a sequence of projections $\{f_i\}$ \textbf{non-redundant} if none of them can be expressed as a function of the preceding ones. That is, for every~$i$,
	\begin{equation}\label{eq:redundantVector}
	f_i \bx \ne g (\prevF{ \bx })
	\end{equation}
	for every function $g:\mathbb{R}^{i-1}\rightarrow\mathbb{R}$.
\end{definition}

Let us see why existing spectral dimensionality reduction algorithms do not necessarily yield non-redundant projections. Spectral algorithms obtain the $i$th projection of all the data points, denoted by $\f{i} = ( f_i(\boldsymbol{x}_1) , \cdots , f_i(\boldsymbol{x}_N) )^T$, as the solution to the optimization problem\footnote{Note that LEM and DFM use slightly different constraints (see supplementary material). Also, note that some methods (e.g.\ LEM, LLE) rather \emph{minimize} the objective in (\ref{prob:dimReduction}). Here we address only the maximization problem, as minimizing $\f{i}^T \bK \f{i}$ is equivalent to maximizing $\f{i}^T \check{\bK} \f{i}$, where $\check{\bK} = \lambda_{\max}\boldsymbol{I}-\bK$ with $\lambda_{\max}$ denoting the largest eigenvalue of $\bK$ \cite{ham2004kernel,bengio2004out}.}
\begin{equation}\label{prob:dimReduction}
\begin{aligned}
& \max_{\f{i}}
& & \f{i}^T \bK \f{i} \\
& \hspace{0.085cm} \mbox{s.t.}
& & \boldsymbol{1}^T \f{i}=0\\
&&& \f{i}^T \f{i} = 1 \\
&&& \f{i}^T \f{j} =0,\quad \forall j<i \enspace .
\end{aligned}
\end{equation}
Here, $\boldsymbol{1}$ is an $N\times 1$ vector of ones and $\bK$ is an $N\times N$ algorithm-specific positive definite (kernel) matrix, constructed from the data points \cite{goldberg2008manifold,van2009dimensionality}. The first constraint in Problem~\eqref{prob:dimReduction} ensures that the projections have zero means. The last two constraints restrict the projections to have unit norms and to be orthogonal w.r.t. one another. The solution to Problem (\ref{prob:dimReduction}) is given by the $d$ top eigenvectors of the centered kernel matrix $(\boldsymbol{I}-\frac{1}{N}\boldsymbol{1}\boldsymbol{1}^T)\bK(\boldsymbol{I}-\frac{1}{N}\boldsymbol{1}\boldsymbol{1}^T)$. When $\bK$ is a stochastic matrix (e.g.\ LLE, LEM), the solution is simply eigenvectors $2,\ldots,d+1$ of $\bK$ (without centering).

The orthogonality constraints in Problem (\ref{prob:dimReduction}) guarantee that the projections be linearly independent. However, these constraints do not guarantee non-redundancy. To see this, it is insightful to interpret them in statistical terms. Assume that the data points $\{\boldsymbol{x}_n\}$ correspond to independent realizations of some random vector $X$. Then orthogonality corresponds to zero statistical correlation, as
\begin{equation} \label{eq:orthogonality}
\EE{f_i \X f_j \X} \approx \tfrac{1}{N} \sum_n f_i ( \boldsymbol{x}_n ) f_j ( \boldsymbol{x}_n ) = \tfrac{1}{N} \f{i}^T \f{j} = 0 \enspace .
\end{equation}
Therefore, in particular, the orthogonality constraints in \eqref{prob:dimReduction} guarantee that each projection be uncorrelated with any linear combination of the preceding projections. This implies that
$f_i \X$ \emph{cannot be a linear function of the previous projections} $\{f_j \X \}_{j<i}$. However, this does not prevent $f_i \X$ from being a \emph{nonlinear} function of the previous projections, which would lead to redundancy, as we saw in Figs.~\ref{fig:swissRoll},~\ref{fig:strips} and \ref{fig:ringExample}.

To enforce non-redundancy, i.e. each projection is not a function of the previous ones, we propose to use the following observation.
\begin{lemma}\label{lem:redundancy}
	A sequence of non-trivial zero-mean projections $\{f_i\}$ is \textbf{non-redundant} if each of them is \textbf{unpredictable} from the preceding ones, namely
	\begin{equation}\label{eq:unpredictable}
	\EE{f_i \X \vert \prevF{\X} } = 0 \enspace .
	\end{equation}
\end{lemma}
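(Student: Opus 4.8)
The plan is to prove the statement by contradiction, using only the elementary ``taking out what is known'' property of conditional expectation. So I assume the hypotheses of the lemma hold --- the projections are non-trivial, zero-mean, and satisfy the unpredictability constraint \eqref{eq:unpredictable} --- and suppose, for contradiction, that the conclusion fails for some index~$i$. By Definition~\ref{def:redundancy}, this means $f_i$ is redundant, i.e.\ there exists a function $g:\mathbb{R}^{i-1}\rightarrow\mathbb{R}$ with $f_i\bx = g(\prevF{\bx})$, so that the random variable $f_i(X)$ coincides with $g(\prevF{\X})$.

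The central step is to observe that a deterministic function of the conditioning variables is left unchanged by the conditional expectation. Since $g(\prevF{\X})$ is measurable with respect to the information carried by $\prevF{\X}$, it can be pulled outside the conditional expectation, giving
\begin{equation}
\EE{f_i \X \vert \prevF{\X}} = \EE{g(\prevF{\X}) \vert \prevF{\X}} = g(\prevF{\X}) = f_i \X .
\end{equation}
I would then combine this identity with the unpredictability hypothesis \eqref{eq:unpredictable}, which asserts that the left-hand side equals~$0$. The two together force $f_i \X = 0$ (almost surely, or equivalently at every data point in the empirical reading of~$\E$), meaning that $f_i$ is the trivial all-zeros projection. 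This contradicts the standing assumption that the projections are \emph{non-trivial}. Hence no such $g$ exists and $f_i$ is non-redundant; since $i$ was arbitrary, the whole sequence is non-redundant.

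The argument is short, and the only place requiring genuine care is the first step: justifying that $g(\prevF{\X})$ passes through the conditional expectation unchanged. This rests on two mild regularity conditions --- that $f_i(X)$ is integrable (immediate here, since each projection has unit empirical norm in Problem~\eqref{prob:dimReduction}) and that $g(\prevF{\X})$ is measurable with respect to $\sigma(\prevF{\X})$. In the finite-sample interpretation where $\E$ denotes the sample average, as in \eqref{eq:orthogonality}, both conditions hold automatically, so the chief ``obstacle'' is really just stating these assumptions precisely rather than overcoming any analytic difficulty. As a consistency check on the hypotheses, note that the tower property gives $\EE{f_i \X} = \EE{\EE{f_i \X \vert \prevF{\X}}} = 0$, so the unpredictability constraint is compatible with --- indeed entails --- the zero-mean normalization assumed for the projections.
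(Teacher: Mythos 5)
Your proof is correct, and it reaches the contradiction by a slightly different (and more elementary) route than the paper. The paper also argues by contradiction from $f_i(X)=h(\prevF{\X})$, but its key step is the \emph{orthogonality property} of the conditional expectation, $\E [ ( f_i(X)- \E[ f_i(X) \vert \prevF{\X} ] ) \, g( \prevF{\X} ) ] = 0$ for all $g$, which together with \eqref{eq:unpredictable} yields the intermediate claim \eqref{eq:strictCondition} that $f_i\X$ is uncorrelated with \emph{every} function of the preceding projections; specializing $g$ to the witness $h$ then gives $\EE{f_i^2\X}=0$. You instead invoke the ``taking out what is known'' property to get $\EE{f_i\X \vert \prevF{\X}} = f_i\X$, which combined with \eqref{eq:unpredictable} forces $f_i\X=0$ almost surely. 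The two endpoints ($\EE{f_i^2\X}=0$ versus $f_i\X=0$ a.s.) are equivalent, and both rest on the defining characterization of conditional expectation, so neither argument is deeper than the other. What the paper's detour buys is the by-product \eqref{eq:strictCondition}, which the authors reuse immediately after the proof to argue that unpredictability is strictly stronger than the zero-correlation constraint \eqref{eq:orthogonality}; your version is shorter but does not produce that statement. Your closing remarks on measurability and integrability, and the tower-property check that \eqref{eq:unpredictable} entails the zero-mean normalization, are sound and, if anything, more careful than the paper's treatment.
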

\begin{proof}
	Assume that \eqref{eq:unpredictable} holds and suppose to the contrary that the $i$th projection is non-trivial and redundant, so that $f_i(X)=h(f_{i-1}(X),\ldots,f_1(X))$ for some function $h$. According to the orthogonality property of the conditional expectation,
	\begin{equation}
	\E [ ( f_i(X)- \E[ f_i(X) | \prevF{\X} ] ) \, g( \prevF{\X} ) ] = 0
	\end{equation}
	for every function $g$. Substituting~\eqref{eq:unpredictable}, this property implies that
	\begin{equation}\label{eq:strictCondition}
	\EE{f_i \X \, g(\prevF{\X})} = 0, \quad\forall g \enspace .
	\end{equation}
	Therefore, in particular, for $g \equiv h$ we get that $\mathbb{E} [f_i^2 \X]=0$, contradicting our assumption that $f_i \X$ is non-trivial.
\end{proof}

Notice that by enforcing unpredictability, we in fact restrict each projection to be uncorrelated with \emph{any function} of the previous projections (see \eqref{eq:strictCondition}). This constraint is much stronger than the original zero correlation constraint \eqref{eq:orthogonality}.

\section{Algorithm} \label{sec:Unpredictability}

The unpredictability condition (\ref{eq:unpredictable}) is in fact an infinite set (a continuum) of constraints, as it restricts the conditional expectation of $f_i(X)$ to be zero, given every possible value that the previous projections $\{f_{j}\X\}_{j<i}$ may take. To obtain a practical method, we propose to enforce these restrictions only at the sample embedding points, leading to a discrete set of $N$ constraints
\begin{equation}\label{eq:discreteConstraints}
\EE{f_i \X \vert \{f_{j} \X = f_{j}(\boldsymbol{x}_n)\}_{j<i}} = 0 ,\quad n=1,\ldots, N \enspace .
\end{equation}
 These $N$ conditional expectations can be approximated using a kernel smoother matrix $\boldsymbol{P}_i \in \mathbb{R}^{N \times N}$ (e.g.\ the Nadaraya-Watson estimator \cite{nadaraya1964estimating,watson1964smooth}) for regressing $\f{i}$ against $\f{i-1},\ldots,\f{1}$, so that the $n$th entry of the vector $\boldsymbol{P}_i\f{i}$ constitutes an approximation to the $n$th conditional expectation in \eqref{eq:discreteConstraints},
\begin{equation}
\left[\boldsymbol{P}_i\f{i}\right]_n \approx \EE{f_i \X \vert \{f_{j} \X = f_{j}(\boldsymbol{x}_n)\}_{j<i}} \enspace .
\end{equation}
We therefore propose to replace the zero-correlation constraints $\f{i}^T\f{j}=0$ in~\eqref{prob:dimReduction}, by the unpredictability restrictions $\boldsymbol{P}_i\f{i} = \boldsymbol{0}$. Our proposed redundancy-avoiding version of the spectral dimensionality reduction problem \eqref{prob:dimReduction} is thus
\begin{equation}\label{prob:dimReductionUnpredictable}
\begin{aligned}
& \max_{\f{i}}
& & \f{i}^T \bK \f{i} \\
& \hspace{0.085cm} \mbox{s.t.}
& & \boldsymbol{1}^T \f{i}=0\\
&&& \f{i}^T \f{i} = 1\\
&&& \boldsymbol{P}_i \f{i} = \boldsymbol{0}, \quad \forall i>1 \enspace .
\end{aligned}
\end{equation}

Note that in the continuous domain, the conditional expectation operator has a non-empty null space. However, this property is usually not maintained by non-parametric sample approximations, like kernel regressors. As a result, the matrix $\boldsymbol{P}_i$ will typically be only \emph{approximately} low-rank. Figure \ref{fig:SVdecay} shows a representative example, where $0.1\%$ of the singular values account for over $99.9\%$ of the Frobenius norm. To ensure that $\boldsymbol{P}_i$ is strictly low-rank (so that the constraint $\boldsymbol{P}_i\f{i} = \boldsymbol{0}$  is not an empty set), we truncate its negligible singular values.

\begin{figure}[!t]
	\begin{center}
		\includegraphics[width=0.45\linewidth]{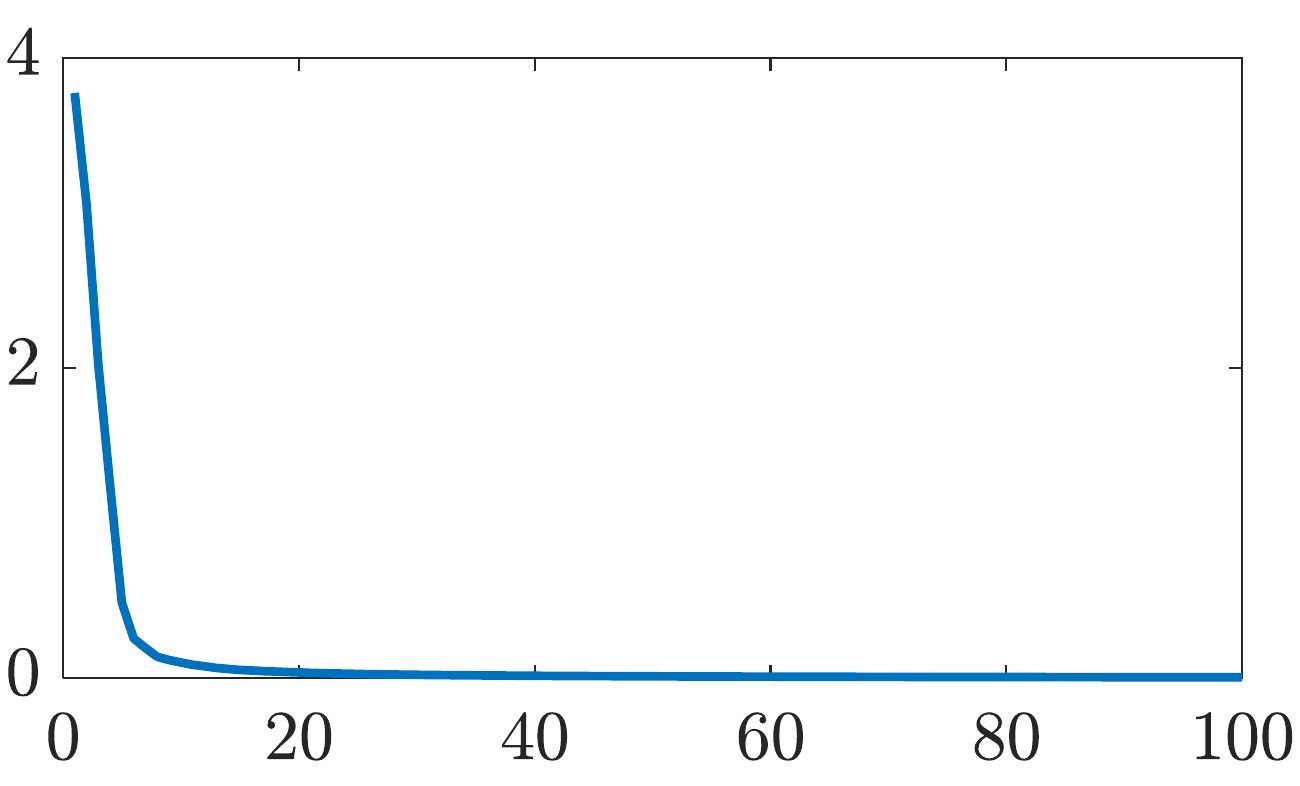}
	\end{center}
	\caption{Top 100 of 15K singular values of the matrix $\boldsymbol{P}_2$ in the MNIST experiment of Fig. \ref{fig:projRegression}. The matrix is very close to being low-rank: $0.1\%$ of its singular values account for over $99.9\%$ of its Frobenius norm.}
	\label{fig:SVdecay}
\end{figure}

The solution to problem \eqref{prob:dimReductionUnpredictable} is no longer given by the spectral decomposition of $\bK$. However, it can be brought into a convenient form by using the following lemma\footnote{Note that this lemma holds true only for \emph{maximization} problems.} (see proof in supplementary material).
\begin{lemma}\label{lem:solveUnpredictable}
	Denote the compact SVD of $\boldsymbol{P}_i$ by $\boldsymbol{U}_i \boldsymbol{D}_i \boldsymbol{V}_i^T$. Then the vectors $\f{1},\ldots,\f{d}$ which optimize Problem~\eqref{prob:dimReductionUnpredictable}, also optimize
	\begin{equation}\label{prob:dimReductionModified}
	\begin{aligned}
	& \max_{\f{i}}
	& & \f{i}^T \tilde{\bK}_i \f{i} \\
	& \hspace{0.085cm} \normalfont \mbox{s.t.}
	& & \boldsymbol{1}^T \f{i}=0\\
	&&& \f{i}^T \f{i} = 1 \enspace ,
	\end{aligned}
	\end{equation}
	where $\tilde{\bK}_i = ( \boldsymbol{I}-\boldsymbol{V}_i\boldsymbol{V}_i^T ) \bK ( \boldsymbol{I}-\boldsymbol{V}_i\boldsymbol{V}_i^T )$ and $\boldsymbol{V}_1=\boldsymbol{0}$.
\end{lemma}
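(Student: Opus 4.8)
The plan is to show that Problems~\eqref{prob:dimReductionUnpredictable} and~\eqref{prob:dimReductionModified} have the same optimal value and share the same maximizers, by exploiting the fact that the hard constraint $\boldsymbol{P}_i\f{i}=\boldsymbol{0}$ is an orthogonal-projection constraint that has already been ``baked into'' the matrix $\tilde{\bK}_i$. The first step is to recast the constraint geometrically. Since $\boldsymbol{P}_i=\boldsymbol{U}_i\boldsymbol{D}_i\boldsymbol{V}_i^T$ is a \emph{compact} SVD, $\boldsymbol{D}_i$ is invertible, so $\boldsymbol{P}_i\f{i}=\boldsymbol{0}$ holds if and only if $\boldsymbol{V}_i^T\f{i}=\boldsymbol{0}$, i.e.\ $\ImVV\f{i}=\f{i}$. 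Writing $\boldsymbol{M}_i=\ImVV$, which is a symmetric idempotent projector ($\boldsymbol{M}_i=\boldsymbol{M}_i^T=\boldsymbol{M}_i^2$) onto the null space of $\boldsymbol{P}_i$, the key algebraic identity is that any $\f{i}$ with $\boldsymbol{M}_i\f{i}=\f{i}$ satisfies $\f{i}^T\tilde{\bK}_i\f{i}=(\boldsymbol{M}_i\f{i})^T\bK(\boldsymbol{M}_i\f{i})=\f{i}^T\bK\f{i}$. Hence the two objectives coincide on the entire feasible set of~\eqref{prob:dimReductionUnpredictable}.

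From here I would establish two inequalities between the optima. The feasible set of~\eqref{prob:dimReductionUnpredictable} is contained in that of~\eqref{prob:dimReductionModified} (the latter merely drops $\boldsymbol{P}_i\f{i}=\boldsymbol{0}$), and on this smaller set the objectives agree, so the optimum of~\eqref{prob:dimReductionModified} is at least that of~\eqref{prob:dimReductionUnpredictable}. For the reverse direction I would take an arbitrary feasible $\boldsymbol{f}$ of~\eqref{prob:dimReductionModified} and set $\boldsymbol{g}=\boldsymbol{M}_i\boldsymbol{f}$. Idempotence gives $\boldsymbol{f}^T\tilde{\bK}_i\boldsymbol{f}=\boldsymbol{g}^T\bK\boldsymbol{g}$ and $\boldsymbol{M}_i\boldsymbol{g}=\boldsymbol{g}$ (so $\boldsymbol{g}$ obeys the dropped constraint), while $\|\boldsymbol{g}\|\le\|\boldsymbol{f}\|=1$. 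Because $\bK\succ\boldsymbol{0}$ we have $\boldsymbol{g}^T\bK\boldsymbol{g}\ge0$, so $\|\boldsymbol{g}\|\le1$ yields $\boldsymbol{g}^T\bK\boldsymbol{g}\le(\boldsymbol{g}/\|\boldsymbol{g}\|)^T\bK(\boldsymbol{g}/\|\boldsymbol{g}\|)$, and the unit vector $\boldsymbol{g}/\|\boldsymbol{g}\|$ is feasible for~\eqref{prob:dimReductionUnpredictable}, so its value is at most the optimum there. This forces equality of the two optima. The same computation shows that a maximizer $\f{i}^\star$ of~\eqref{prob:dimReductionModified} must satisfy $\boldsymbol{M}_i\f{i}^\star=\f{i}^\star$: since $\bK\succ\boldsymbol{0}$ the optimum is strictly positive, so if $\|\boldsymbol{M}_i\f{i}^\star\|<1$ then renormalizing would strictly beat $\f{i}^\star$; thus $\f{i}^\star$ is itself feasible and optimal for~\eqref{prob:dimReductionUnpredictable}.

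I expect the main obstacle to be the interaction with the zero-mean constraint $\boldsymbol{1}^T\f{i}=0$, which is common to both problems but is \emph{not automatically preserved} by the projector $\boldsymbol{M}_i$: the renormalization step uses $\boldsymbol{g}=\boldsymbol{M}_i\boldsymbol{f}$, and $\boldsymbol{1}^T\boldsymbol{g}=(\boldsymbol{M}_i\boldsymbol{1})^T\boldsymbol{f}$ vanishes only when $\boldsymbol{1}$ lies in the null space of $\boldsymbol{P}_i$, i.e.\ $\boldsymbol{V}_i^T\boldsymbol{1}=\boldsymbol{0}$. Care is therefore needed to argue that the centering direction is compatible with $\boldsymbol{M}_i$ -- either by verifying this property of the kernel smoother $\boldsymbol{P}_i$, or by folding the zero-mean constraint into a joint centering-and-projection operator and rerunning the argument on the subspace orthogonal to $\boldsymbol{1}$. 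A secondary point worth flagging is that the renormalization step relies essentially on \emph{maximizing} a positive-definite form, since shrinking the norm lowers the objective; for a minimization problem this step would fail, which is precisely why the lemma is stated only for the maximization problem.
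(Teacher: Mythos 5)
Your core argument is the right one and, as far as the main text reveals (the paper defers this proof to its supplementary material), it is the standard route one would expect there: the compact SVD makes $\boldsymbol{P}_i\f{i}=\boldsymbol{0}$ equivalent to $\boldsymbol{V}_i^T\f{i}=\boldsymbol{0}$, i.e.\ to $\f{i}$ being a fixed point of the orthogonal projector $\boldsymbol{M}_i=\ImVV$; on that subspace the two quadratic forms coincide; and because $\bK\succ\boldsymbol{0}$ and one is \emph{maximizing}, a maximizer of the relaxed problem cannot retain a component in $\mathrm{range}(\boldsymbol{V}_i)$ (equivalently, since $\mathrm{range}(\tilde{\bK}_i)\subseteq\mathrm{range}(\boldsymbol{M}_i)$, any top eigenvector of $\tilde{\bK}_i$ with positive eigenvalue automatically satisfies the dropped constraint). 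Your renormalization step and your closing remark about why the argument collapses for minimization are both correct.

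The one substantive issue is the interaction with $\boldsymbol{1}^T\f{i}=0$, which you flag but leave unresolved, and your first proposed remedy does not in fact work: for the Nadaraya--Watson smoother of Alg.~\ref{alg:NRDR}, $\boldsymbol{P}_i$ is row-stochastic, so $\boldsymbol{P}_i\boldsymbol{1}=\boldsymbol{1}$ and $\boldsymbol{1}$ necessarily has a nonzero component in $\mathrm{range}(\boldsymbol{V}_i)$; hence $\boldsymbol{M}_i\boldsymbol{1}\neq\boldsymbol{1}$ and $\boldsymbol{M}_i$ does not preserve the zero-mean property. Your second remedy (folding $\boldsymbol{1}$ into the projection, i.e.\ replacing $\boldsymbol{V}_i$ by an orthonormal basis for $\mathrm{span}\{\boldsymbol{V}_i,\boldsymbol{1}\}$) does make the argument airtight, but it alters $\tilde{\bK}_i$ from the form stated in the lemma; so as stated, the equivalence holds exactly for the norm and unpredictability constraints and only approximately for the centering constraint (e.g.\ under the approximation that $\boldsymbol{P}_i\f{i}=\boldsymbol{0}$ already forces $\boldsymbol{1}^T\f{i}\approx 0$, which holds when $\boldsymbol{P}_i$ is close to doubly stochastic). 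Identifying this subtlety is a strength of your write-up, but your proof as submitted does not close it.
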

From this lemma, it becomes clear that $\f{i}$ is precisely the top eigenvector of $\tilde{\bK}_i$. This implies that we can determine the non-redundant projections sequentially. In the $i$th step, we first modify the kernel $\bK$ according to the previous projections $\f{i-1},\ldots,\f{1}$ to obtain $\tilde{\bK}_i$. Then, we compute its top eigenvector to obtain projection $\f{i}$. 
This is summarized in Alg.~\ref{alg:NRDR}, where for concreteness, we chose $\boldsymbol{P}_i$ to be the Nadaraya-Watson smoother with a Gaussian-kernel.

\begin{algorithm}[!t]
	\caption{Non-redundant dimensionality reduction.}\label{alg:NRDR}
	\renewcommand{\algorithmicrequire}{\textbf{Input:}}
	\renewcommand{\algorithmicensure}{\textbf{Output:}}
	\begin{algorithmic}[1]
		\REQUIRE{High-dimensional data points $\boldsymbol{x}_n \in \mathbb{R}^{D}$.}
		\ENSURE{Embeddings $\f{i} = ( f_i(\boldsymbol{x}_1) , \cdots , f_i(\boldsymbol{x}_N) )^T$.}
		\STATE{Construct the kernel matrix $\bK$ as in the original algorithm (e.g.\ LLE, LEM, Isomap, etc.).}
		\STATE{If the original algorithm \emph{minimizes} the objective of (\ref{prob:dimReduction}) (e.g.\ LLE, LEM), then set $\bK \leftarrow \lambda_{\max}\boldsymbol{I}-\bK$.}
		
		\STATE{Assign the top (non-trivial)  eigen-vector of $\bK$ to $\f{1}$.}
		
		\FOR{$i = 2,\ldots,d$}		
		\STATE{Construct smoothing matrix
			\begin{align*}
			\left[\boldsymbol{P}_i\right]_{j,k} &\leftarrow \exp \left\{ - \frac{\sum_{\ell=1}^{i-1} \left( f_\ell \left( x_j \right) - f_\ell \left( x_k \right) \right)^2}{2 h^2}  \right\} \enspace , \\
			\left[\boldsymbol{P}_i\right]_{j,k} &\leftarrow \frac{\left[\boldsymbol{P}_i\right]_{j,k}}{\sum_{n=1}^{N}\left[\boldsymbol{P}_i\right]_{j,n}} \enspace .
			\end{align*}}		
		\STATE{Compute $\boldsymbol{V}_i \in \mathbb{R}^{N \times r}$, the top $r$ right singular vectors of $\boldsymbol{P}_i$ accounting for all non-negligible singular values.}		
		\STATE{Form the modified kernel matrix
			\begin{align*} \tilde{\bK}_i \leftarrow \left( \boldsymbol{I}-\boldsymbol{V}_i\boldsymbol{V}_i^T \right) \bK \left( \boldsymbol{I}-\boldsymbol{V}_i\boldsymbol{V}_i^T \right) \enspace .
			\end{align*}}
		\STATE{Assign the top eigen-vector of $\tilde{\bK}_i$ to $\f{i}$.}
		\ENDFOR	
	\end{algorithmic}
\end{algorithm}

\subsection{Efficient Implementation}
In several spectral dimensionality reduction algorithms (e.g.\ LLE, LEM) the kernel matrix $\bK$ is sparse, making them fit for large data sets in terms of memory and computational requirements. However, our modified kernel matrices $\tilde{\bK}_i$ are generally not sparse. To retain some of the efficiency of the original algorithms, we make two adjustments to Alg.~\ref{alg:NRDR}. First, in step~5 of the algorithm, we construct a \emph{sparse} smoothing matrix $\boldsymbol{P}_i$, by using only the $k$ nearest neighbors of each sample. This reduces the memory required to store $\boldsymbol{P}_i$ and also enables efficient computation of its top (non-negligible) singular vectors $\boldsymbol{V}_i$ (step~6). Second, we use the fast method of \cite{halko2011finding} to compute the top eigenvector of $\tilde{\bK}_i$ (step~8). Besides speed, this has the advantage that we never need to explicitly form the dense matrix $\tilde{\bK}_i$ (step~7). Indeed, each iteration of \cite{halko2011finding} involves multiplication by $\tilde{\bK}_i$, which can be broken into multiplications by $\boldsymbol{V}_i$, $\boldsymbol{V}_i^T$, and $\bK$. Therefore, we only have to store $\bK$, which is sparse, and $\boldsymbol{V}_i$, which is $N\times r$ with $r \ll N$.

It should be noted, however, that the effect of the sparsity of $\boldsymbol{P}_i$ on the running time and memory use, is somewhat more modest than could be expected. This is because the sparser $\boldsymbol{P}_i$ is, the slower its singular values decay, and thus the larger its rank. Thus, a sparser $\boldsymbol{P}_i$ requires computation of more singular vectors, which also slows the eigen-decomposition of $\tilde{\bK}_i$ (as $\boldsymbol{V}_i$ has more columns).

\subsection{Relation to Independent Component Analysis (ICA)}
Our method may seem similar to ICA \cite{jutten1991blind,hyvarinen1997fast}, however, they are quite distinct. First, the ICA \emph{objective} is independence (without preservation of geometrical structure), while in our method the objective is to preserve geometric structure subject to a statistical constraint on the embedding coordinates. Second, \emph{non-linear} ICA is an under-determined problem, making it necessary to impose assumptions or to restrict the class of non-linear functions \cite{hyvarinen1999nonlinear,singer2008non}. Finally, independence is a stronger constraint than unpredictability, and would thus narrow the set of possible solutions. This is while, as we saw, unpredictability is enough for avoiding redundancy.

\section{Experiments} \label{sec:Experiments}
We tested our non-redundant algorithm on three high-dimensional data sets. In all our experiments, we report results with the Nadaraya-Watson smoother \cite{nadaraya1964estimating,watson1964smooth}, as specified in Alg.~\ref{alg:NRDR}.
We also experimented with a locally linear smoother and did not observe a significant difference. The kernel smoother bandwidth $h$  was set adaptively: for computing $\boldsymbol{P}_i$, we took $h = \alpha (\sum_{j=1}^{i-1}\tfrac{1}{N}\|\boldsymbol{f}_j\|^2)^{1/2}$, where the parameter $\alpha \in [0.1,0.6]$ was chosen using a tune set in the classification task and manually in the visualization tasks. Singular vectors corresponding to singular values smaller than $3\%$ of the largest singular value were truncated. We used the largest number of nearest neighbors such that $\boldsymbol{P}_i$ could still be stored in memory (10K in our case).

\subsection{Artificial Head Images}

The artificial head image dataset \cite{tenenbaum2000global} is a popular test bed for manifold learning techniques. It contains $64\times64$ computer-rendered images of a head, with varying vertical and horizontal camera positions (denoted by $\theta$ and $\phi$) and lighting directions (denoted by $\psi$). Since each of the parameters $(\theta,\phi,\psi)$ varies significantly across this data set, most spectral methods manage to non-redundantly extract those parameters with the first three projections.

Here, to make the representation learning task more challenging, we chose a $257$ subset of the original data set, corresponding to the reduced parameter range $\theta\in[-75^\circ,75^\circ]$, $\phi\in[-8^\circ,8^\circ]$, $\psi\in[105^\circ,175^\circ]$. Figures~\ref{fig:facesPlot}(a),(c) visualize the projections extracted by LEM and LTSA in this case. As can be seen, both algorithms produce redundant representations, as their second projection is a deterministic function of the first. When incorporating our unpredictability constraints, we are able to avoid this repetition, as evident from Figs.~\ref{fig:facesPlot}(b),(d). Indeed, in our method, the second projection clearly carries additional information w.r.t.\@ the first.

\begin{figure}[!t]
	\begin{center}
		\includegraphics[width=\linewidth]{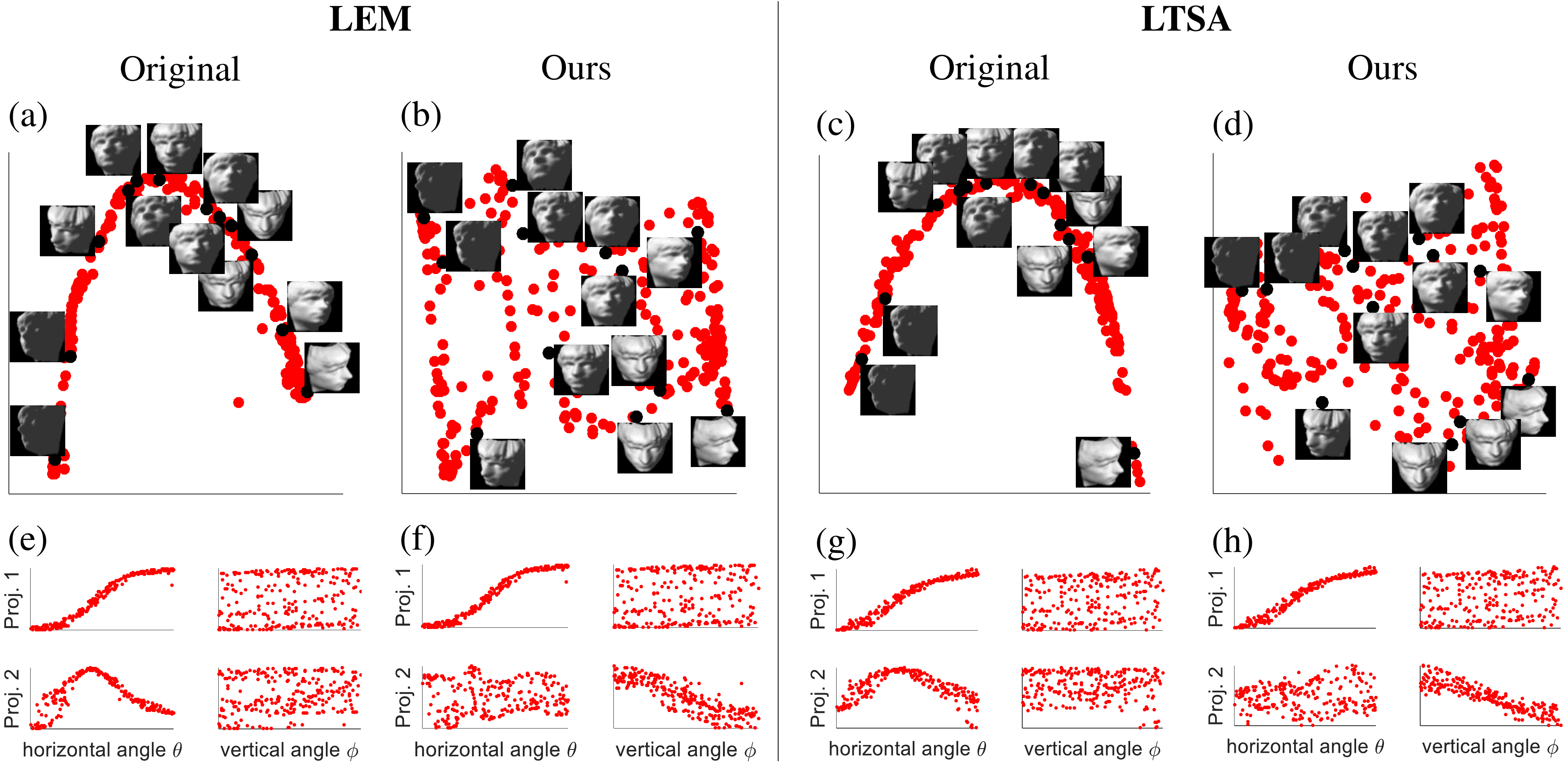}
	\end{center}
	\caption{Two-dimensional embeddings of computer rendered face images with varying vertical angles, horizontal angles, and lighting directions. (a) The original LEM method. (b) Our non-redundant LEM. (c) The original LTSA method. (d) Our non-redundant LTSA. The original algorithms produce redundant representations, as their second coordinate is a function of the first. In our method, the second coordinate clearly carries additional information w.r.t.\@ first, and therefore our representations are non-redundant. (e)-(h) The first two projections of the head images vs.\@ the horizontal and vertical angles $(\theta,\phi)$ of the heads. The two projections extracted by the original algorithms are \emph{both} correlated only with the horizontal angle $\theta$. In our non-redundant algorithms, on the other hand, the second projection is correlated with the vertical angle $\phi$.}
	\label{fig:facesPlot}
\end{figure}

To analyze what the projections capture, we plot in Fig.~\ref{fig:facesPlot}(e)-(h) each of the embedding coordinates vs.\@ the horizontal and vertical camera positions. From Figs.~\ref{fig:facesPlot}(e),(g) it becomes obvious that in the original algorithms, Projections~1 and~2 are both correlated only with the horizontal angle $\theta$. In our approach, on the other hand, Projection~1 captures the horizontal angle $\theta$ while Projection~2 reveals the vertical angle~$\phi$ (see Figs.~\ref{fig:facesPlot}(f),(h)).

\subsection{Image Patch Representation}

To visualize the effect of non-redundancy in low-level vision tasks, we extracted all $7 \times 7$ patches with 3 pixel overlap from an image (taken from \cite{rubinstein2010comparative}), and learned a three dimensional representation using Isomap and using our non-redundant version of Isomap. Figure~\ref{fig:patchesExperiment} visualizes the first three projections by coloring each pixel according to the embedding value of its surrounding patch. Observe that in the original algorithm, the first projection captures brightness, the second redundantly captures brightness once more, and the third captures mainly vertical edges with some brightness attributes still remaining (e.g.\ the sky, the left poolside). In contrast, in our algorithm, the second and third projections capture the vertical \emph{and horizontal} edges (without redundantly capturing brightness multiple times), thus providing additional information. The redundancy of the $2$nd Isomap  projection is clearly seen in the scatter plot of the $2$nd projection vs. the $1$st. With our non-redundant algorithm, the $2$nd projection is clearly not a function of the $1$st, and thus captures new informative features. 

\begin{figure}[!t]
	\begin{center}
		\includegraphics[width=\linewidth]{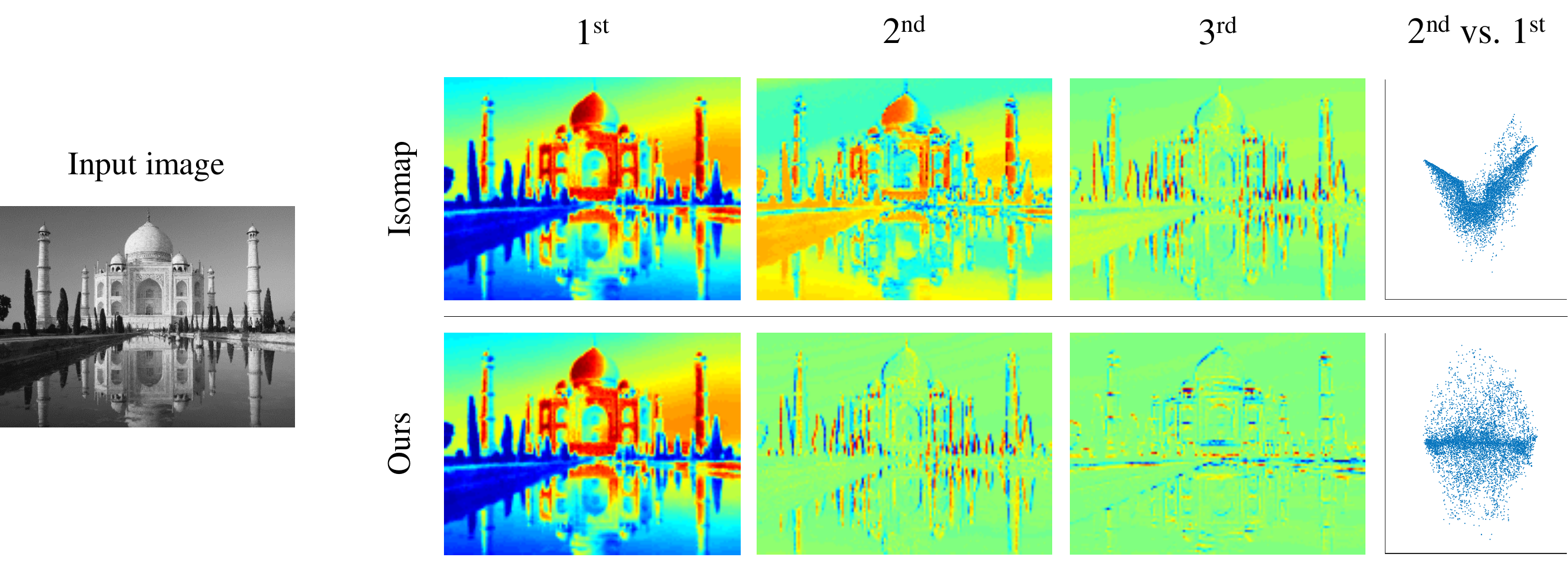}
	\end{center}
	\caption{Three-dimensional embedding of all $7\times7$ patches with a 3 pixel overlap, obtained with Isomap and with our non-redundant version of Isomap. Each pixels is colored according to the projection of its surrounding patch. In both methods, the first projection captures brightness. However, the original Isomap redundantly captures brightness-related features again with the second projection, and captures vertical edges only with the third projection. In contrast, our non-redundant version captures vertical \emph{and horizontal} edges with the second and third projections. The scatter plot reveals that in the original Isomap, the $2nd$ projection is a function of the $1$st, while in ours it is not.}
	\label{fig:patchesExperiment}
\end{figure}

Notice that the brightness and gradient features are linear functions of the input patches. Thus, our extracted 3D manifold is in fact linear and would be also correctly revealed by linear methods, such as PCA (not shown). Nevertheless, Isomap which is a nonlinear method, fails to extract this linear manifold due to redundancy (similarly to Fig. \ref{fig:strips}). In contrast, our non-redundant algorithm can reveal the underlying manifold regardless of its complexity.

\subsection{MNIST Handwritten Digits}\label{sec:MNIST}

In most practical applications, the ``correct'' parametrization of the data manifold is not as obvious as in the head experiment. One such example is the MNIST database~\cite{MNIST}, which contains $28 \times 28$ images of handwritten digits. In such settings, determining the quality of a low-dimensional representation can be done by measuring its impact on the performance in downstream tasks, like classification.

In the next experiment, we randomly chose a subset of 15K images from the MNIST data set, based on which we learned low-dimensional representations with LEM and with three modifications of LEM: (i)~the sequential regression technique (Sect.~\ref{sec:RelatedWork}), (ii)~the algorithm of Dsilva et. al.\ \cite{dsilva2015parsimonious}, and (iii)~our non-redundant method. We then split the data into 10K/2.5K/2.5K for training/tuning/testing and trained a third degree polynomial-kernel SVM \cite{chang2011libsvm} to classify the digits based on their low-dimensional representations. The SVM's soft margin parameter $c$ and kernel parameter $\gamma$ were tuned based on performance on the tune set (within the range $c \in [1,10]$, $\gamma \in [0.1,0.2]$). Table \ref{tab:MNIST} shows the classification error for various representation sizes. As can be seen, our non-redundant representation leads to the largest and most consistent decrease in the classification error.

\begin{table}
	\centering
	\caption{MNIST experiment classification errors $[\%]$.}
	\label{tab:MNIST}
	\begin{tabular}{c|cccc}
		\hline
		\multicolumn{5}{c}{\textbf{15K examples, all labeled}}\\
		\hline
		\makecell{\# of\\proj.} \hspace{0.1cm} & \hspace{0.1cm} \makecell{Laplacian\\eigenmaps}\hspace{0.1cm} & \makecell{Dsilva\\ et.~al.}\hspace{0.1cm} & \makecell{Sequential\\regression}\hspace{0.1cm} & Ours\\
		\hline
		3 & 17.6 & 17.6 & 17.3 & \textbf{12.0} \\
		5 & 8.8 & 8.8 & 14.4 & \textbf{7.6} \\
		7 & 6.9 & 6.9 & 14.2 & \textbf{6.0} \\
		9 & 6.5 & 6.5 & 14.2 & \textbf{5.6} \\
		11 & 6.0 & 5.4 & 13.8 & \textbf{5.0} \\
		\hline
	\end{tabular}
\quad\quad
	\begin{tabular}{c|cc}
		\hline
		\multicolumn{3}{c}{\textbf{15K examples, 300 labeled} }\\
		\hline
		\makecell{\# of\\proj.} \hspace{0.1cm} & \hspace{0.1cm} \makecell{Laplacian\\eigenmaps}\hspace{0.1cm} & Ours\hspace{0.1cm}\\
		\hline
		5 & 12.6 & 10.3 \\
		16 & 8.4 & {\color{red} 6.6} \\
		24 & {\color{red} 7.2} & 7.2 \\
		35 & 7.8 & 8.1 \\
		50 & 8.8 & 8.8 \\
		\hline
	\end{tabular}
\end{table}

To demonstrate the importance of \emph{compact} representations, particularly in the semi-supervised scenario, we repeated the experiment where only 300 of the examples are labeled for the SVM training (right pan of Table \ref{tab:MNIST}). Notice that the error reaches a minimum at 16\slash24 projections with our\slash LEM method, and then begins to rise as the representation dimension increases. This illustrates that unnecessarily large representations result in inferior performance in downstream tasks. Our method, which is designed to construct compact representations, achieves a lower minimal error ($6.6\%$ vs. $7.2\%$).

\section{Conclusions} \label{sec:Conclusions}
We presented a general approach for overcoming the redundancy phenomenon in spectral dimensionality reduction algorithms. As opposed to prior attempts, which fail in complex high-dimensional situations, our approach provably produces non-redundant representations. This is achieved by replacing the orthogonality constraints underlying spectral methods, by unpredictability constraints. Our solution reduces to applying a sequence of spectral decompositions, where in each step, the kernel matrix is modified according to the projections computed so far. Our experiments clearly illustrate the ability of our method to capture more informative compact representations of high-dimensional data.

\bibliographystyle{splncs03}
\bibliography{NRDRbib}

\end{document}